\newcolumntype{x}[1]{%
>{\raggedleft\hspace{0pt}}p{#1}}%
\definecolor{scan1blue}{HTML}{0072bd}
\definecolor{scan2red}{HTML}{d95319}
\definecolor{commentclr}{RGB}{34, 139, 34}
\newcommand{\tikzcircle}[2][red,fill=red]{\tikz[baseline=-0.5ex]\draw[#1,radius=#2] (0,0) circle ;}%
\newcommand{\ra}[1]{\renewcommand{\arraystretch}{#1}}
\newtheorem{prop}{Proposition}
\theoremstyle{definition}
\newtheorem{definition}{Definition}
\newcommand{\newreptheorem}[2]{%
\newtheorem*{rep@#1}{\rep@title}%
\newenvironment{rep#1}[1]{%
 \def\rep@title{#2 \ref*{##1}}%
 \begin{rep@#1}}%
 {\end{rep@#1}}}
\LetLtxMacro\orgvdots\vdots
\LetLtxMacro\orgddots\ddots
\DeclareRobustCommand\vdots{%
	\mathpalette\@vdots{}%
}
\newcommand*{\@vdots}[2]{%
	\sbox0{$#1\cdotp\cdotp\cdotp\m@th$}%
	\sbox2{$#1.\m@th$}%
	\vbox{%
		\dimen@=\wd0 %
		\advance\dimen@ -3\ht2 %
		\kern.5\dimen@
		\dimen@=\wd2 %
		\advance\dimen@ -\ht2 %
		\dimen2=\wd0 %
		\advance\dimen2 -\dimen@
		\vbox to \dimen2{%
			\offinterlineskip
			\copy2 \vfill\copy2 \vfill\copy2 %
		}%
	}%
}
\DeclareRobustCommand\ddots{%
	\mathinner{%
		\mathpalette\@ddots{}%
		\mkern\thinmuskip
	}%
}
\newcommand*{\@ddots}[2]{%
	\sbox0{$#1\cdotp\cdotp\cdotp\m@th$}%
	\sbox2{$#1.\m@th$}%
	\vbox{%
		\dimen@=\wd0 %
		\advance\dimen@ -3\ht2 %
		\kern.5\dimen@
		\dimen@=\wd2 %
		\advance\dimen@ -\ht2 %
		\dimen2=\wd0 %
		\advance\dimen2 -\dimen@
		\vbox to \dimen2{%
			\offinterlineskip
			\hbox{$#1\mathpunct{.}\m@th$}%
			\vfill
			\hbox{$#1\mathpunct{\kern\wd2}\mathpunct{.}\m@th$}%
			\vfill
			\hbox{$#1\mathpunct{\kern\wd2}\mathpunct{\kern\wd2}\mathpunct{.}\m@th$}%
		}%
	}%
}
\let\oldnl\nl%
\newcommand{\nonl}{\renewcommand{\nl}{\let\nl\oldnl}}%
\def\Gr{\mathrm{Gr}}
\def\Graff{\mathrm{Graff}}
\def\bA{\mathbb{A}}
\def\bY{\mathbb{Y}}
\def\S{\mathcal{S}}
\def\recall{\tiny \stackanchor{Recall}{[\%]}}
\def\auc{\tiny \stackanchor{LMR}{AUC}}
\def\rerr{\tiny \stackanchor{$R_\text{e}$}{[deg]}}
\def\terr{\tiny \stackanchor{$t_\text{e}$}{[cm]}}
\def\time{\tiny \stackanchor{$t$}{[ms]}}
\newcommand{\M}[1]{{\bm #1}} %
\renewcommand{\boldsymbol}[1]{{\bm #1}}
\newcommand{\MR}{\M{R}}
\newcommand{\vt}{\boldsymbol{t}}
\newcommand\eqdef{\mathrel{\overset{\makebox[0pt]{\mbox{\normalfont\tiny def}}}{=}}}
\newcommand{\rev}[1]{#1}
\title{GraffMatch: Global Matching of 3D Lines and Planes for Wide Baseline LiDAR Registration}
\author{Parker C. Lusk$^{1}$,  Devarth Parikh$^{2}$, Jonathan P. How$^{1}$%
    \thanks{Manuscript received: August, 16, 2022; Revised November, 5, 2022; Accepted November, 28, 2022.}%
    \thanks{This paper was recommended for publication by Editor Sven Behnke upon evaluation of the Associate Editor and Reviewers' comments.
This work was supported by the Ford Motor Company.} %
\thanks{$^{1}$Department of Aeronautics and Astronautics, Massachusetts Institute of Technology.
        {\tt\footnotesize \{plusk, jhow\}@mit.edu}}%
\thanks{$^{2} $Ford Motor Company.
        {\tt\footnotesize dparikh9@ford.com}}%
\thanks{Digital Object Identifier (DOI): see top of this page.}
}
\begin{document}

\maketitle

\markboth{IEEE Robotics and Automation Letters. Preprint Version. Accepted November, 2022}
{Lusk \MakeLowercase{\textit{et al.}}: GraffMatch: Global Matching of 3D Lines and Planes for Wide Baseline LiDAR Registration} 

\begin{abstract} 
Using geometric landmarks like lines and planes can increase navigation accuracy and decrease map storage requirements compared to commonly-used LiDAR point cloud maps.
However, landmark-based registration for applications like loop closure detection is challenging because a reliable initial guess is not available.
Global landmark matching has been investigated in the literature, but these methods typically use ad hoc representations of 3D line and plane landmarks that are not invariant to large viewpoint changes, resulting in incorrect matches and high registration error.
To address this issue, we adopt the affine Grassmannian manifold to represent 3D lines and planes and prove that the distance between two landmarks is invariant to rotation and translation if a shift operation is performed before applying the Grassmannian metric.
This invariance property enables the use of our graph-based data association framework for identifying landmark matches that can subsequently be used for registration in the least-squares sense.
Evaluated on a challenging landmark matching and registration task using publicly-available LiDAR datasets, \rev{our approach yields a $1.7${\normalfont x} and $3.5${\normalfont x} improvement} in successful registrations compared to methods that use viewpoint-dependent centroid and ``closest point'' representations, respectively.

\end{abstract}

\begin{IEEEkeywords}
Localization; Mapping; Recognition
\end{IEEEkeywords}

\section{Introduction}\label{sec:intro}

\IEEEPARstart{E}{stimating} the rigid-body transformation between two sensors is a fundamental component of many mobile robotic systems.
Wide-baseline registration is particularly challenging since an odometry signal may not be accurate, or even available, to use as an initial guess.
This situation arises in core tasks such as loop closure generation, multi-robot map merging, extrinsic calibration, and global (re)localization.
In these cases, the relative rotation and translation between sensors can instead be accurately estimated by matching co-visible features and optimizing for the best feature alignment.

In visual settings, appearance-based descriptors are commonly used for image retrieval or place recognition~\cite{lowry2015visual,garg2021where,galvez2012bags}.
However, appearance-based techniques are often limited due to their sensitivity to illumination, weather, and viewpoint changes.
Working with 3D sensors can alleviate these issues because of the geometric nature of the data~\cite{kim2021scan,zeng20173dmatch}, but this requires storing and processing large point clouds, which can hinder online operation.
In the context of LiDAR-based navigation, using geometric landmarks such as lines and planes has resulted in storage-efficient maps~\cite{schaefer2019long,kummerle2019accurate,cao2020accurate}, better scene understanding~\cite{cao2021lidar,bavle2022situational}, and low-drift odometry and mapping~\cite{zhang2014loam,kaess2015simultaneous,geneva2018lips,zhou2022plc}.
However, existing geometric landmark matching techniques tend to assume 2D motion only~\cite{cao2020accurate,cao2021lidar}, use local association strategies given an initial guess from odometry~\cite{zhou2021pi,zhou2022plc}, or use a series of heuristic checks~\cite{pathak2010fast,fernandez2013fast,jiang2020lipmatch} that lead to low matching success rate.
\rev{SegMatch~\cite{dube2017segmatch} uses point clusters as landmarks, but the repeatability of cluster segmentation may strongly depend on viewing angle, especially in the presence of partial occlusion or object motion.}

Recent successes in graph-based data association~\cite{lusk2021clipper,shi2021robin} have significantly increased the robustness of the correspondence selection process in spatial perception problems by leveraging the notion of pairwise consistency.
By matching constellations of objects that have consistent pairwise distances across two views, good correspondences can be identified even in the presence of many bad ones.
Global data association techniques do not require an initial registration guess, but rely on a correctly defined pairwise distance that is invariant to transformation; for example, the Euclidean distance between two rigidly-attached points does not change even as those points are rotated and translated.
Existing works frequently use Euclidean distance between line or plane pairs, using representations such as the centroid~\cite{cao2020accurate}, which is not well-defined for infinite lines and planes, or the ``closest point'' (CP) parameterization~\cite{geneva2018lips}, which lacks the necessary translation invariance because of its dependence on sensor origin.

\begin{figure}[t]
    \centering
    \vstretch{1}{\includegraphics[trim=1cm 4cm 8cm 20cm, clip, width=\columnwidth]{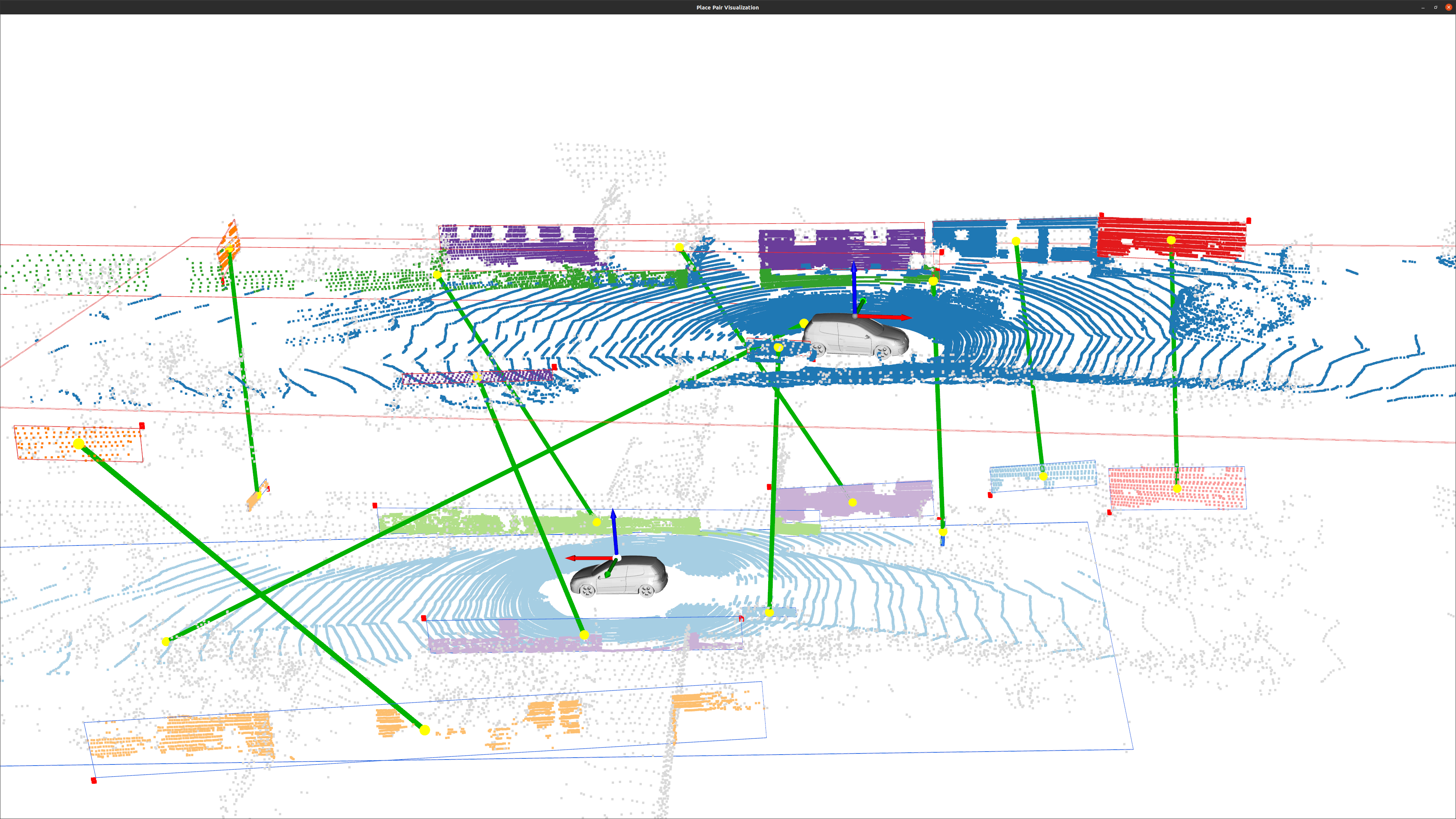}}
    \caption{Successful matching and alignment of line and plane landmarks from two LiDAR scans \SI{180}{\deg} and \SI{14}{\m} apart, without an initial guess.
    Sensor origins are denoted by the coordinate axes on top of the cars and scans are offset in the $z$ direction for visualization.
    Poles and planar patches extracted from each LiDAR scan are represented as 3D affine Grassmannian elements.
    Using the associated Riemannian metric allows for the evaluation of geometric consistency between landmark pairs.
    Correspondences (shown with green lines connecting landmark centroids) are identified using our GraffMatch algorithm and then used to estimate the rotation and translation between the two sensors, yielding an alignment error of \SI{0.8}{\deg} and \SI{12}{\cm}.
    }
    \label{fig:teaser-image}
\end{figure}

Instead, we represent line and plane landmarks naturally as elements of a Grassmannian manifold, which is the space of all linear subspaces.
In particular, we utilize the \emph{affine} Grassmannian manifold, which allows for the representation of affine subspaces (i.e., linear subspaces not necessarily containing the origin).
Thus, invariant distances between geometric primitives can easily be defined in a principled manner using the Grassmannian metric, enabling the use of our robust, global data association framework~\cite{lusk2021clipper}.
Then, the rigid transformation between a pair of candidate loop closure scans can be estimated by solving a line and plane registration problem with known correspondences in the least-squares sense.

This letter presents an evolution of our previous work~\cite{lusk2022global} by providing further validation on additional datasets, an in-depth experimental analysis, and new applications of our method in automatic extrinsic sensor calibration.
In summary, our main contributions are:
\begin{itemize}
    \item the introduction of the affine Grassmannian for global data association of lines and planes, leading to 3D registration without requiring an initial guess;
    \item a least squares estimator for rigid transformation \rev{using line and plane representations directly, leading to accurate rotation and translation estimation;}
    \item experimental evaluation in the context of loop closure, using challenging scan pairs of LiDAR datasets, showing superior recall and accuracy over the state-of-the-art.
\end{itemize}
We emphasize that this is the first work using the affine Grassmannian manifold for data association, which provides a unifying and principled framework for associating points, lines, planes (or higher dimensional linear objects) in spatial and geometric perception problems encountered in robotics. %

\section{Related Work}\label{sec:relatedwork}

Geometric registration algorithms are either local or global, depending on if an initial guess is required.
Local methods are often used in scan matching (e.g., ICP~\cite{besl1992method}), where consecutive scans typically have small displacement between them.
In contrast, global methods do not need an initial guess to succeed and are often preferred in settings like loop closure detection because no good initial guess exists.
In point-based registration, global methods first generate candidate point correspondences, typically based on local descriptors~\cite{rusu2009fast,zeng20173dmatch,choy2019fully}.
The set of putative correspondences are likely to contain incorrect matches, called outliers, and so robust iterative estimation techniques like RANSAC~\cite{fischler1981random} or graduated non-convexity~\cite{zhou2016fast,yang2020teaser} can be used to select a subset of correspondences that best support the model.
Recently, non-iterative robust approaches based on graph theory~\cite{lusk2021clipper,shi2021robin} have been introduced that significantly improve the performance of estimation in the presence of outliers.
In this work, we apply our CLIPPER algorithm~\cite{lusk2021clipper} to the domain of global matching of 3D lines and planes.

Poles/lines and planes commonly exist in man-made environments and have recently been used as landmarks in LiDAR navigation.
The benefits of using higher-order geometric primitives include lower storage and processing requirements since there are fewer pole and plane objects than points~\cite{schaefer2019long,kummerle2019accurate,cao2020accurate}, and more accurate odometry because it is infeasible to get exact point-to-point correspondences from sparse LiDAR point clouds~\cite{zhou2021pi}.
Local methods for landmark-based registration rely on identifying the closest landmarks between two scans, given an initial alignment guess.
Nearest neighbor search requires calculating distances between landmarks, which is dependent on the landmark representation.
The most common representations include vector form for lines, Hesse normal form for planes, and the CP vector for lines/planes~\cite{yang2019observability}.
The CP representation compactly encodes position and vector orientation in a 3-vector, and the Euclidean distance is often used to find similar landmarks~\cite{geneva2018lips,zhou2021pi}.
For lines in vector form or planes in Hesse normal form, both the angle between vectors and the distance between points is used~\cite{kaess2015simultaneous}.
If points are retained in one of the scans, then the point-to-landmark RMSE can also be used~\cite{zhou2022plc}.
Other methods~\cite{schaefer2019long,kummerle2019accurate,wilbers2019localization} project the centroids of the detected landmark onto the ground plane, creating 2D points.
However, these methods assume the landmark (infinite line or plane) has a well-defined centroid, the ground plane is known, and that 2D registration is sufficient.

Since descriptors for 3D lines and planes have not been thoroughly explored, most global methods rely on a series of geometric tests to assign correspondences~\cite{pathak2010fast}.
LiPMatch~\cite{jiang2020lipmatch} adopts an interpretation tree for plane matching~\cite{fernandez2013fast} in loop closure detection, using unary and binary constraints between candidate plane matches to determine the largest set of consistent matches.
However, some of these constraints are sensitive to viewpoint change (e.g., centroid, area).
ClusterMatch~\cite{cao2021lidar,cao2020accurate} matches poles/lines and planes by iteratively searching for landmark pairs with similar pairwise centroid distance until a large number of matches supports the resulting transformation.
A critical drawback of these methods is that, although global methods, the heuristic geometric tests that are used are often heavily view dependent.
In contrast, our method performs global registration by searching for a set of correspondences that have consistent \rev{intrascan} pairwise distances, using a distance definition based on their natural representation as elements of Grassmannian manifolds.

The Grassmannian has been used extensively in subspace learning~\cite{hamm2008grassmann}, especially in face recognition~\cite{huang2015projection} and appearance tracking~\cite{shirazi2014object} tasks in computer vision.
Calinon~\cite{calinon2020gaussians} outlines the use of Riemannian manifolds in robotics and notes the under-representation of the Grassmannian manifold.

\subsection{Preliminaries}\label{sec:preliminaries}

We briefly introduce the Grassmannian manifold, but a more comprehensive introduction is provided in~\cite{edelman1998geometry}.
The Grassmannian is the space of $k$-dimensional subspaces of $\mathbb{R}^n$, denoted $\Gr(k,n)$.
For example, $\Gr(1,3)$ represent 3D lines containing the origin.
An element $\bA\in\Gr(k,n)$ is represented by an orthonormal matrix $A\in\mathbb{R}^{n\times k}$ whose columns form an orthonormal basis of $\bA$.
Note that the choice of $A$ is not unique.
The geodesic distance between two subspaces $\bA_1\in\Gr(k_1,n)$ and $\bA_2\in\Gr(k_2,n)$ is
\begin{equation}
d_\mathrm{Gr}(\bA_1, \bA_2) = \left(\sum_{i=1}^{\min(k_1,k_2)} \theta_i^2\right)^{1/2}
\end{equation}
where $\theta_i$ are known as the principal angles~\cite{bjorck1973numerical}.
These angles can be computed via the singular value decomposition (SVD) of the corresponding orthonormal matrices of $\bA_1$ and $\bA_2$,
\begin{equation}
A_1^\top A_2 = U\, \mathrm{diag}(\cos\theta_1, \dots, \cos\theta_{\min(k_1,k_2)} )\, V^\top.
\end{equation}
\vskip-0.0em
\begin{figure}[t]
    \centering
    \begin{subfigure}[b]{0.49\columnwidth}
        \includeinkscape[pretex=\footnotesize,width=\columnwidth]{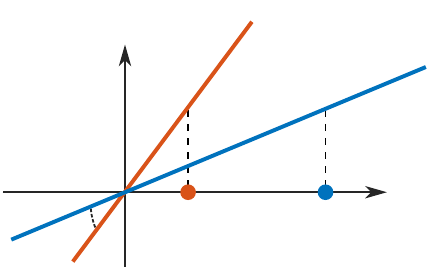}
        \caption{}
        \label{fig:graffexample}
    \end{subfigure}
    \begin{subfigure}[b]{0.49\columnwidth}
        \includegraphics[trim = 0mm 0mm 0mm 0mm, clip, width=1\linewidth]{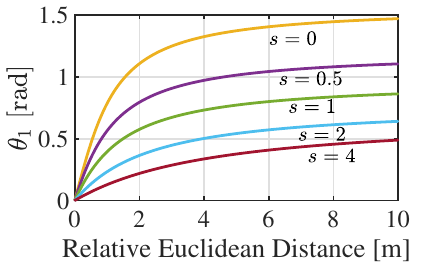}
        \caption{}
        \label{fig:graffsensitivity}
    \end{subfigure}
    \caption{(a) Example of points in $\Graff(0,1)$ being embedded as lines in $\Gr(1,2)$.
    The principal angle between these two linear subspaces is $\theta_1$.
    (b) When applied directly, $d_\Graff$ is not invariant to global translation $s$.
    }
    \label{fig:graffexample-both}
\end{figure}

We are specifically interested in affine subspaces of $\mathbb{R}^3$, e.g., lines and planes that may be at some distance away from the origin.
In analogy to $\mathrm{Gr}(k,n)$, the set of $k$-dimensional affine subspaces constitute a smooth manifold called the \emph{affine Grassmannian} and denoted $\Graff(k,n)$~\cite{lim2021grassmannian}.
We write an element of this manifold as $\bY=\bA+b\in\Graff(k,n)$ with affine coordinates $[A,b]\in\mathbb{R}^{n\times(k+1)}$, where $A\in\mathbb{R}^{n\times k}$ is an orthonormal matrix and $b\in\mathbb{R}^n$ is the displacement of $\bA$ from the origin.
We emphasize that $\Graff(k,n)\neq\Gr(k,n)\times\mathbb{R}^n$.
Instead, an element $\bY\in\Graff(k,n)$ is treated as a higher-order subspace via the embedding
\begin{align}
j:\Graff(k,n)&\hookrightarrow\Gr(k+1,n+1), \notag \\ 
\bA+b &\mapsto \mathrm{span}(\bA\cup\{b+e_{n+1}\}),
\end{align}
where $e_{n+1} = (0,\dots,0,1)^\top\in\mathbb{R}^{n+1}$ (see \cite[Theorem 1]{lim2021grassmannian}).
Fig.~\ref{fig:graffexample} shows an example of two points $\bY_1$ and $\bY_2$ in $\mathbb{R}$ being embedded as different lines $j(\bY_1)$ and $j(\bY_2)$ in $\mathbb{R}^2$.

The Stiefel coordinates of $\bY\in\Graff(k,n)$,
\begin{equation}
Y =
\begin{bmatrix}
A & b_0/\sqrt{1+\|b_0\|^2} \\
0 & 1/\sqrt{1+\|b_0\|^2}
\end{bmatrix}\in\mathbb{R}^{(n+1)\times(k+1)},
\end{equation}
allow for the computation of distances between two affine subspaces using the Grassmannian metric,
\begin{equation}
d_\Graff(\bY_1,\bY_2) = d_\Gr(j(\bY_1),j(\bY_2)),
\end{equation}
with principal angles computed via the SVD of $Y_1^\top Y_2$.
The vector $b_0\in\mathbb{R}^n$ is the orthogonal displacement of $\bA$, which is the projection of $b$ onto the left nullspace of $A$ s.t. $A^\top b_0=0$.

For convenience, the line $\bY^\ell\in\Graff(1,3)$ may also be represented in vector form as $\ell = [A;b]\in\mathbb{R}^6$, and a plane $\bY^\pi\in\Graff(2,3)$ may be represented in Hesse normal form as $\pi = [n;d]\in\mathbb{R}^4$ where $n = \mathrm{ker}\,A^\top$ and $d = \|b_0\|$.
Under a rigid transformation $T=(R,t)\in\mathrm{SE}(3)$, the transformation law of lines and planes can be written
\begin{align}
\ell' &= f_\ell(\ell,R,t) := \begin{bmatrix}RA&Rb+t\end{bmatrix}^\top \\
\pi'  &= f_\pi(\pi,R,t) := T^{-\top}\pi.
\end{align}

\section{Method}\label{sec:method}

Given a set ${\S_i = \{ \bY^\ell_1,\dots,\bY^\ell_{l_i}, \bY^\pi_1,\dots,\bY^\pi_{p_i}\}}$ with $l_i$ lines and $p_i$ planes, we refer to the \rev{$a$-th landmark} as $s_{i,a}\in\S_i$.
Our method is comprised of the following steps: (i) constructing a consistency graph based on pairwise landmark distances, (ii) identifying landmark correspondences via the densest complete subgraph in the consistency graph, and (iii) estimating a rigid transformation based on the identified correspondences.

\begin{figure}[t]
    \centering
    \includeinkscape[pretex=\footnotesize,width=\columnwidth]{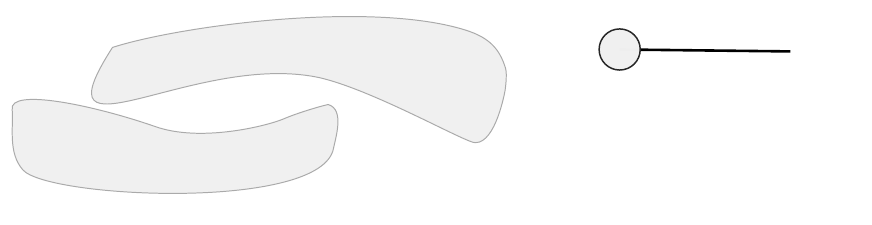}
    \caption{
    Construction of a consistency graph.
    Using $d_\Graff$, the distance between a line and a plane in set $\S_i$ (\tikzcircle[scan1blue,fill=scan1blue]{1.5pt}) is compared to the distance between the two corresponding landmarks in set $\S_j$ (\tikzcircle[scan2red,fill=scan2red]{1.5pt}).
    The consistency of these two distances is evaluated using \eqref{eq:consistency} and the edge $(u_1,u_2)$ is so weighted.
    }
    \label{fig:consistencygraph}
\end{figure}

\subsection{Consistency Graph Construction}\label{sec:consistency}

\rev{
A consistency graph for two sets $\mathcal{S}_i$, $\mathcal{S}_j$ is an undirected weighted graph $\mathcal{G}=(\mathcal{V},\mathcal{E},w)$.
Each vertex $u\in\mathcal{V}$ represents a potential correspondence between landmarks of the same type, denoted $s_{i,a}\leftrightarrow s_{j,b}$.
An edge $(u_p,u_q)\in\mathcal{E}$ between vertices $u_p,u_q\in\mathcal{V}$ exists if the correspondence pair is \emph{consistent}; the edge-weighting function $w:\mathcal{E}\to[0,1]$ indicates the level of consistency.
The correspondence pair $u_p,u_q$ is consistent if the distance between the underlying landmarks satisfies %
\begin{equation}\label{eq:consistency}
c_{u_p,u_q} \eqdef |d(s_{i,u_p^i},\,s_{i,u_q^i}) - d(s_{j,u_p^j},\,s_{j,u_q^j})| < \epsilon,
\end{equation}
where $d$ is some distance function and, by some abuse of notation, $u_p^i$ denotes the $a$-th landmark of $\mathcal{S}_i$ involved in the correspondence $u_p$.
Note that the two distances in \eqref{eq:consistency} are between landmarks \emph{internal} to sets $\S_i$ and $\S_j$, respectively.
If a pair of correspondences are deemed consistent, the corresponding edge is attributed the weight $w(u_p,u_q)\eqdef f(c_{u_p,u_q})$, for some choice of ${f:\mathbb{R}_+\to[0,1]}$ that scores very consistent pairs close to 1.
In this paper, we choose $f(c)\eqdef\exp(-c^2/2\sigma^2)$ for simplicity, although other appropriate functions could be used.
Given a consistency graph, correspondences are selected that maximize consistency, further explained in Section~\ref{sec:clipper}.}

The distance function $d$ must be carefully chosen to ensure accuracy of graph-based data association.
In particular, we desire \eqref{eq:consistency} to hold when $s_{j,u_1^b},\,s_{j,u_2^b}$ are the transformed versions of $s_{i,u_1^a},\,s_{i,u_2^a}$, respectively.
This invariance property leads to subgraphs of the consistency graph that indicate a set of landmark matches.
\begin{definition}\label{defn:invariance}
A distance $d:X\times X\to\mathbb{R}$ is \emph{invariant} if $d(x_1,x_2) = d(x_1',x_2')$, where $x_1',x_2'\in X$ are the transformation of $x_1,x_2\in X$ under $T\in\mathrm{SE}(3)$, respectively.
\end{definition}
We establish the invariance of the metric $d_\Graff$ to rotation and, under careful application, translation.
\begin{prop}\label{prop:invariance}
For elements $\bY_1\in\Graff(k_1,3)$, $\bY_2\in\Graff(k_2,3)$ with affine coordinates $[A_1,b_1]$ and $[A_2,b_2]$, the affine Grassmannian metric $d_\Graff$ is invariant if the affine components are first \emph{shifted} to the origin, i.e., if both $b_1$ and $b_2$ are first translated by $-b_1$.
\end{prop}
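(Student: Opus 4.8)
The plan is to split the claim into two parts: first use the shift to reduce the \SE{3} invariance to \emph{pure-rotation} invariance, and then verify rotation invariance directly on the Stiefel coordinates. Write $\hat\bY_i$ for $\bY_i$ shifted by $-b_1$, so that $\hat\bY_1 = \bA_1$ (displacement $0$) and $\hat\bY_2 = \bA_2 + (b_2-b_1)$. Because the affine subspace $\bA+b$ is carried by $T=(R,t)$ to $R\bA+(Rb+t)$, the transformed pair $\bY_1',\bY_2'$ has first displacement $b_1'=Rb_1+t$. Shifting the transformed pair by $-b_1'$ then gives $\hat\bY_1' = R\bA_1$ and $\hat\bY_2' = R\bA_2 + (Rb_2+t)-(Rb_1+t) = R\bA_2 + R(b_2-b_1)$, i.e. $\hat\bY_i' = R\hat\bY_i$ for $i=1,2$. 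The point is that the shift is defined \emph{adaptively} — always subtracting the current first displacement — so the translation $t$ cancels and the shifted transformed pair is exactly the rotation by $R$ of the shifted original pair. It therefore suffices to show $d_\Graff(\hat\bY_1,\hat\bY_2) = d_\Graff(R\hat\bY_1, R\hat\bY_2)$.

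Next I would establish rotation invariance by showing that $R\in\SO{3}$ lifts to a block-diagonal orthogonal action on Stiefel coordinates. Under $R$ the orthonormal factor obeys $A\mapsto RA$, and the orthogonal displacement obeys $b_0\mapsto Rb_0$: the left nullspace of $A$ is carried to the left nullspace of $RA$ (since $(RA)^\top Rb_0 = A^\top b_0 = 0$), and orthogonal projection commutes with the orthogonal map $R$. In particular $\|b_0\|$ is preserved, so the normalization factors $1/\sqrt{1+\|b_0\|^2}$ are unchanged. Reading this off the definition of the Stiefel coordinates $Y$, I get $Y\mapsto \tilde R\,Y$ with $\tilde R = \mathrm{diag}(R,1)\in O(4)$.

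Finally, since $\tilde R$ is orthogonal, the matrix defining the principal angles transforms as $(\tilde R Y_1)^\top(\tilde R Y_2) = Y_1^\top \tilde R^\top \tilde R\, Y_2 = Y_1^\top Y_2$, hence is unchanged. Its singular values $\cos\theta_i$, and therefore the $\theta_i$ and $d_\Graff$, are invariant under rotation; combined with the reduction above this yields the full statement. I would note that nothing here uses $k_1=k_2$, so the argument covers line--line, plane--plane, and line--plane pairs uniformly via the $\min(k_1,k_2)$ angles. The main obstacle I anticipate is the bookkeeping in the second part: confirming that the \emph{orthogonal} displacement $b_0$ (rather than the non-unique $b$) transforms as $Rb_0$ and that its norm is preserved, since this is precisely what makes the lifted action $\tilde R$ orthogonal and thus leaves $Y_1^\top Y_2$ fixed. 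Once that is in hand, the invariance of the principal angles is immediate.
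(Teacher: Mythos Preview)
Your argument is correct and essentially the same as the paper's: both hinge on the adaptive shift cancelling the translation (so the shifted transformed pair is just the rotation of the shifted original pair) and on the fact that $A\mapsto RA$, $b_0\mapsto Rb_0$, $\|b_0\|$ preserved, which forces $Y_1^\top Y_2$ to be unchanged. The only cosmetic difference is that the paper writes out the $2\times 2$ block form of $Y_1^\top Y_2$ explicitly and compares entries before and after $(R,t)$, whereas you package the same computation as a block-diagonal orthogonal lift $\tilde R=\mathrm{diag}(R,1)$ acting on Stiefel coordinates.
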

\begin{proof}
See Appendix A.
\end{proof}
The intuition of Proposition~\ref{prop:invariance} can be understood from Fig.~\ref{fig:graffexample-both}.
As $\bY_1$ and $\bY_2$ are together translated further from the origin, the principal angle between $j(\bY_1)$ and $j(\bY_2)$ decreases to zero in the limit.
However, the distance between the affine components of $\bY_1$ and $\bY_2$ remains the same, no matter the translation.
By first shifting the affine components, we remove the dependence of the absolute translation in the computation of the principal angle, while maintaining the dependence on the \emph{relative} translation between $\bY_1$ and $\bY_2$.

A remaining challenge is to address the insensitivity of $d_\Graff$ to the Euclidean distance between landmarks' affine components.
The yellow curve ($s=0$) in Fig.~\ref{fig:graffsensitivity} represents the principal angle between $\bY_1,\bY_2\in\Graff(0,1)$ after shifting them as per Proposition~\ref{prop:invariance}, as a function of the relative translation between $\bY_1$ and $\bY_2$.
Observe that after a distance of approximately \SI{2}{\meter}, the curve quickly asymptotes towards $\tfrac{\pi}{2}$.
This nonlinearity leads to poor discrimination between pairs of correspondences whose internal landmarks are far apart in the Euclidean sense.
To combat this when calculating pairwise affine Grassmannian distances, we first scale the affine component of each $\bY_i$ by a constant parameter $\rho$ so that the affine coordinates of $\bY_i$ become $[A_i,b_i/\rho]$.
The choice of $\rho$ depends on the average Euclidean distance between landmarks in the environment and its effect is to bring principal angles into the linear regime.
The selection of $\rho$ is discussed further in Section~\ref{sec:exp-scaling}.

With Proposition~\ref{prop:invariance} and the scaling parameter $\rho$ in hand, a consistency graph between landmarks in $\S_i$ and $\S_j$ can be constructed.
We establish initial correspondences between each landmark in $\S_i$ with each landmark of $\S_j$ so long as the landmarks are of the same dimensions $k$ (i.e., we do not allow lines to be associated to planes).
Given additional information such as color, scan intensity, planar patch area, or pole radius, this initial set of correspondences could be refined, but would rely on accurately segmenting lines and planes across potentially wide baselines.
While we restrict landmark correspondences to be of the same dimension, the machinery we have developed allows for computing the consistency of two correspondences whose internal pair of objects have differing dimension, thereby aiding in subgraph selection.
Fig.~\ref{fig:consistencygraph} illustrates evaluating the consistency of a correspondence pair using the affine Grassmannian.

\subsection{Graph-based Global Data Association}\label{sec:clipper}

Given a consistency graph, the task of matching objects from two scans is reduced to identifying the densest clique of consistent correspondences, formalized as the problem
\begin{gather}\label{eq:densestclique}
\begin{array}{ll}
\underset{u \in \{0,1\}^m}{\text{maximize}} & \dfrac{u^\top  M \, u}{u^\top u}
\\
\text{subject to} & u_i \, u_j = 0  \quad \text{if}~ M(i,j)=0, ~ \forall {i,j},
\end{array}
\end{gather}
where $M\in[0,1]^{m\times m}$ is the weighted adjacency matrix (i.e., from $w$ as defined in Section~\ref{sec:consistency}) with ones on the diagonal, and ${u\in\{0,1\}^m}$ indicates a consistent set of correspondences.
Note that we choose to maximize the \emph{density} of correspondences rather than the cardinality (i.e., maximum clique) as our previous work has found this objective to produce more accurate results~\cite{lusk2021clipper}.
Problem~\eqref{eq:densestclique} is NP-hard, therefore we solve a particular relaxation which yields high accuracy solutions
via our efficient CLIPPER algorithm
(see \cite{lusk2021clipper} for more details).

\subsection{Transformation Estimation}

Given pairwise correspondences between landmarks in $\S_i$ and $\S_j$, consider finding the best rigid transformation to simultaneously align matched lines and planes by solving the optimization problem
\begin{equation}\label{eq:reg} %
\min_{\substack{R\in\mathrm{SO}(3),\\ t\in\mathbb{R}^3}}
\sum_{i=1}^{p} \|\pi_i' - f_\pi(\pi_i,R,t)\|^2
+
\sum_{i=1}^{l} \|\ell_i' - f_\ell(\ell_i,R,t)\|^2.
\end{equation}
This problem can be solved in closed-form by first solving for the rotation via SVD, then solving for the translation via least squares, similar to Arun's method for point cloud registration~\cite{arun1987least}.
\rev{Note that in the case of only parallel planes or only collinear lines, \eqref{eq:reg} will be degenerate (i.e., the correlation matrix used in SVD will be rank deficient).
These cases can be numerically detected via the condition number $\kappa$ of the correlation matrix.}
The benefit of using the line and plane geometry directly, as opposed to a point parameterization, is twofold.
First, it allows the use of the full information present in the infinite plane or line, i.e., distance from origin as well as orientation.
Second, it does not require assumptions about where the ``centroid'' of the plane or line is, which is undefined for infinite planes and lines and requires consistent segmentation of landmarks from point clouds.
Together, these benefits lead to a more accurate rigid transformation estimate when aligning line and plane landmarks.

\section{Experiments}\label{sec:experiments}

We evaluate our method, called GraffMatch, using LiDAR scans from three datasets: KITTI~\cite{geiger2012kitti} sequences 00, 02, 05, 08; KITTI-360~\cite{liao2022kitti360} sequences 00, 04, 06, 09; NCLT~\cite{carlevaris2016nclt} sessions 2012-04-29, 2012-05-11, 2012-12-01.
We include comparisons with CPMatch~\cite{zhou2021pi}, BruteForceRMSE~\cite{zhou2022plc}, ClusterMatch~\cite{cao2020accurate}, and LiPMatch~\cite{jiang2020lipmatch}, all of which are used for geometric landmark registration in state-of-the-art pipelines.
CPMatch uses nearest neighbor search on CP vectors and BruteForceRMSE exhaustively calculates the point-to-line/plane RMSE between each potential landmark match, followed by the selection of correspondences with low RMSE.
ClusterMatch creates putative correspondences using pairwise landmark centroid distances followed by inlier validation, and LiPMatch uses an interpretation tree to find plane correspondences with similar geometric properties.
We also include comparisons using our CLIPPER algorithm, using Euclidean distance of centroids and CP vectors to score consistency, both of which lack the required invariance for lines and planes.
The algorithms are implemented using Python and C++ and executed on an i9-7920X CPU with 64 GB RAM.
The parameters used for GraffMatch (see \eqref{eq:consistency}) are $\epsilon=0.2$ and $\sigma=0.05$.
In our comparisons, we show that GraffMatch successfully matches more landmarks, leading to more successful registrations.

\subsection{Dataset Preparation}\label{sec:dataset}

Motivated by place recognition and loop closure detection applications, we sample poses along the ground-truth trajectory of each dataset sequence with a stride of \SI{2}{\m} to create \emph{places} (i.e., a pose with the LiDAR scan at that pose), following~\cite{kim2021scan}.
For each place $p$, true \emph{revisits} are found by identifying any previously visited place $p'$ such that the path length between $p$ and $p'$ is greater than $\SI{50}{\m}$ and the Euclidean distance between the places is less than $\SI{16}{\m}$.
Ground-truth landmark matches between revisited places are generated by aligning the landmarks using ground truth and solving a linear-sum assignment problem based on $d_\mathrm{Graff}$ distances.
Valid place pairs are selected as \emph{loop closures} from revisits having more than $4$ true landmark matches \rev{with registration conditioning $\kappa<1e3$} and having registration error less than \SI{5}{\deg} and \SI{1}{\m} using the true landmark matches.
This setup provides a scene matching dataset similar to 3DMatch~\cite{zeng20173dmatch}, but having line and plane landmarks instead of point features.

Lines are detected in point clouds by first selecting pole-like points found by clustering the LiDAR range image, as in~\cite{dong2021online}.
For KITTI, pole-like points are selected using semantic information from SemanticKITTI~\cite{behley2019semantickitti}.
These points are clustered using DBSCAN~\cite{ester1996density} implemented in Open3D~\cite{zhou2018open3d} and PCA is used to estimate lines from pole-like clusters.
Planar patches are extracted from the LiDAR scan using our own implementation\footnote{\href{https://github.com/plusk01/pointcloud-plane-segmentation}{https://github.com/plusk01/pointcloud-plane-segmentation}} of~\cite{araujo2020robust}.
Because planar patches are bounded, there may be multiple planar patches that correspond to the same infinite plane, so we merge planes that are similar.
The statistics of correct line and plane landmark matches in the dataset are shown in Fig.~\ref{fig:dataset_details}, separated into three cases according to input inlier ratio (IIR), which is defined as the number of correct matches out of the number of possible matches and indicates the difficulty of each place pair.
Cases 1, 2, and 3 consist of place pairs with IIRs of $0.05+$, \SIrange{0.03}{0.05}{}, and \SIrange{0}{0.03}{}, respectively.
These IIR ranges are chosen to expose the relationship between IIR and sensor baseline distance and to ensure enough place pairs exist in each case.

\begin{figure}[t]
    \centering
    \vstretch{1}{\includegraphics[trim=0cm 0cm 0cm 0cm, clip, width=\columnwidth]{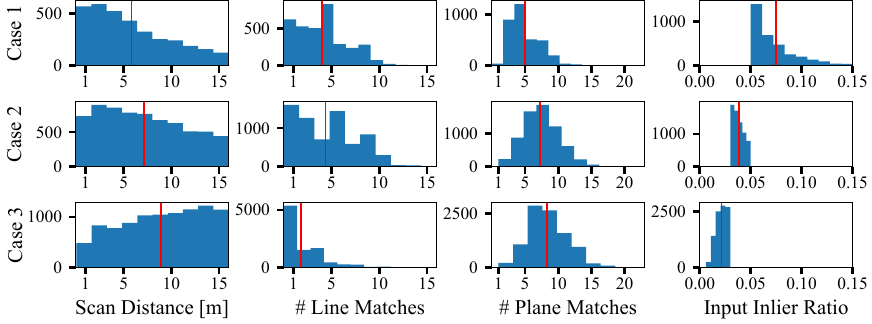}}
    \caption{
    Statistics of the 20k loop closures in the test dataset, divided into three cases according to each loop closure's input inlier ratio.
    The distance between sensor scans and input inlier ratio are correlated, with the most difficult case (Case 3) having longer sensor baselines.
    On average, there are more correct plane landmark matches than correct line landmark matches.
    }
    \label{fig:dataset_details}
\end{figure}

\subsection{Selection of Scaling Parameter}\label{sec:exp-scaling}

The scaling parameter $\rho$ (see Section~\ref{sec:consistency}) is chosen so that the pairwise affine Grassmannian distance lies in the linear regime and is therefore more sensitive when scoring consistencies.
The Velodyne HDL-64E used in KITTI has a range up to \SI{120}{\m}, with an average point range in the KITTI dataset of approximately \SI{80}{\meter}.
Additionally, the average Euclidean distance between landmark centroids in KITTI is \SI[separate-uncertainty=true,multi-part-units=single]{26\pm16}{\meter}, as shown in Fig.~\ref{fig:rho_sensitivity}.
Therefore, we select $\rho=40$ so that relative Euclidean distances of \SI{80}{\meter} will be scaled to \SI{2}{\meter}, which is at the end of the linear regime (see Fig.~\ref{fig:graffsensitivity}).
However, GraffMatch is not extremely sensitive to this choice and yields similar matching results for a range of $\rho$ values while holding other parameters constant, as shown in Fig.~\ref{fig:rho_sensitivity}.

\subsection{Evaluation Metrics}\label{sec:evalmetrics}
We evaluate both landmark matching ability and registration quality.
Correctness of landmark matching is evaluated via landmark-match recall (LMR)~\cite{choy2019fully}, which measures the percentage of place pairs that can be registered with high confidence given landmark matches.
LMR is defined as
\begin{equation}\small
\mathrm{LMR} = \frac{1}{N}\sum_{s=1}^N \mathds{1}\left( \left[ \frac{1}{|\mathcal{C}_s|} \sum_{(i,j)\in \mathcal{C}_s} \mathds{1}\left(d_{ij} < \tau_d\right)\right] > \tau_\mathrm{OIR} \right),
\end{equation}
where $N$ is the number of place pairs, $\mathcal{C}_s$ is a set of landmark correspondences between place pair $s$, and $d_{ij}$ is the $d_\mathrm{Graff}$ distance between landmark matches after registering landmarks using ground truth.
The inlier distance threshold $\tau_d$ controls how close two landmarks must be (after registering the landmarks using the ground truth) to be considered an inlier correspondence.
The output inlier ratio (OIR) measures the precision of an algorithm's correspondence set $\mathcal{C}_s$ and the threshold $\tau_\mathrm{OIR}$ is used to examine the percentage of place pairs an algorithm can recover with at least the specified OIR.
Point-based registration works have evaluated performance with an OIR threshold as low as 0.05~\cite{choy2019fully}, arguing that RANSAC can be effective even with only a 5\% inlier ratio, even though many iterations would be required. %
In contrast, we set \rev{$\tau_\mathrm{OIR}=0.8\gg0.05$} to emphasize the robustness of GraffMatch and we do not use RANSAC for inlier refinement.
\rev{The area under the LMR curve (AUC) generated by sweeping $\tau_\mathrm{OIR}\in[0,1]$ and holding $\tau_d=\SI{6}{\deg}$ is used as a summary measure, with an AUC of $1$ being ideal.}

Registration quality is evaluated via rotation, translation error and registration recall.
Error is calculated with respect to the ground truth transformation $(\MR^\star,\vt^\star)$ as $\arccos ((\mathrm{Tr}(\hat{\MR}^\top \MR^\star) - 1) / 2)$ and $\|\hat{\vt} - \vt^\star\|$.
Registration recall is the percentage of successfully registered place pairs (i.e., loop closures).
A successful registration has an estimation error within \SI{5}{\deg} and \SI{1}{\m} to the ground truth transformation.

\begin{figure}[t]
    \centering
    \begin{subfigure}[b]{0.49\columnwidth}
        \vstretch{1}{\includegraphics[trim=0cm 0cm 0cm 0cm, clip, width=1\columnwidth]{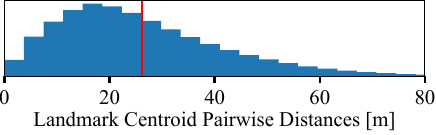}}
    \end{subfigure}
    \begin{subfigure}[b]{0.49\columnwidth}
        \vstretch{1}{\includegraphics[trim=0cm 0cm 0cm 0cm, clip, width=1\columnwidth]{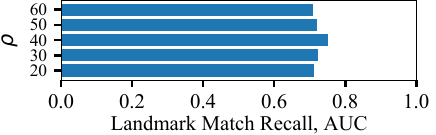}}
    \end{subfigure}
    \vskip-0.2em
    \caption{
    (left) Pairwise distances of landmark centroids in KITTI.
    The mean is \SI[separate-uncertainty=true,multi-part-units=single]{26\pm16}{\meter}.
    Using this data, we choose the scaling parameter as $\rho=40$.
    (right) GraffMatch is not extremely sensitive to this choice of scaling.
    Other values of $\rho$ yield similar matching results in KITTI, indicated by the AUC.
    }
    \label{fig:rho_sensitivity}
\end{figure}

\begin{figure*}[t]
    \centering
    \vstretch{0.94}{\includegraphics[trim=0cm 0cm 0cm 0cm, clip, width=\textwidth]{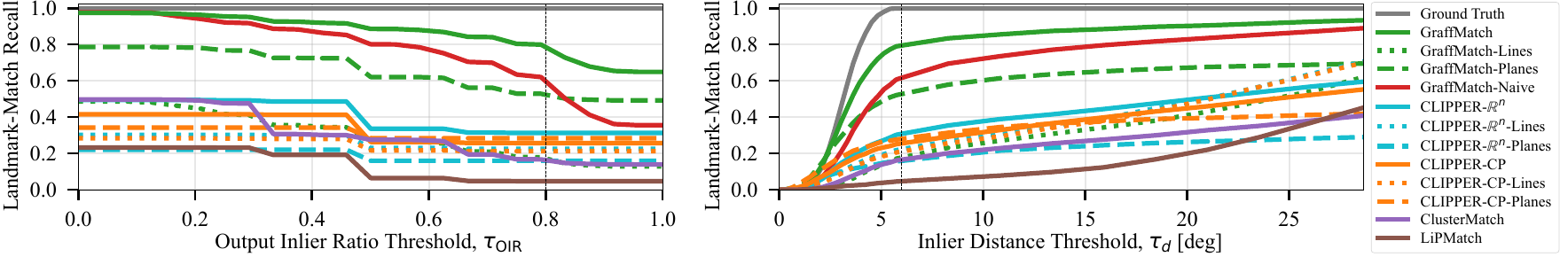}}
    \caption{
    (left) Landmark-match recall (LMR) varied over $\tau_\mathrm{OIR}$, indicating for each algorithm the percentage of place pairs having selected correspondences with at least the given OIR.
    (right) The maximum distance a pair of ground-truth registered landmarks can be while considered an inlier match.
    }
    \label{fig:lmr}
\end{figure*}

\begin{table*}[!h]
\scriptsize
\centering
\caption{
Results for each dataset, divided into three cases based on the feature inlier ratio.
We compare each algorithm on recall rate and landmark-match recall AUC and the highest for each dataset case is bolded.
The average rotation and translation errors of the successful registrations are also listed.
}
\setlength{\tabcolsep}{1.7pt}
\ra{1.05}
\begin{tabular}{p{0.75em} l l c c c c c c c c c c c c c c c c c c c c c c c c c c c c c c c c c c c c}
\toprule
&
&& \multicolumn{4}{c}{CPMatch~\cite{zhou2021pi}}
&& \multicolumn{4}{c}{BruteForceRMSE~\cite{zhou2022plc}}
&& \multicolumn{4}{c}{ClusterMatch~\cite{cao2020accurate}}
&& \multicolumn{4}{c}{LiPMatch~\cite{jiang2020lipmatch}}
&& \multicolumn{4}{c}{CLIPPER-$\mathbb{R}^n$}
&& \multicolumn{4}{c}{CLIPPER-CP}
&& \multicolumn{4}{c}{GraffMatch}\\
\cmidrule{4-7}\cmidrule{9-12}\cmidrule{14-17}\cmidrule{19-22}\cmidrule{24-27}\cmidrule{29-32}\cmidrule{34-37}
&& $N$
 & \recall & \auc & \rerr & \terr
&& \recall & \auc & \rerr & \terr
&& \recall & \auc & \rerr & \terr
&& \recall & \auc & \rerr & \terr
&& \recall & \auc & \rerr & \terr
&& \recall & \auc & \rerr & \terr
&& \recall & \auc & \rerr & \terr\\[0.5em]
\toprule
\multirow{3}{*}{\rotatebox[origin=c]{90}{\tiny KITTI}}
& Case 1 & $1392$
 & $24$ & $0.47$ & $1.1$ & $21$ && $9$  & $0.33$ & $1.2$ & $62$ && $35$ & $0.34$ & $1.7$ & $18$
&& $16$ & $0.30$ & $1.1$ & $19$ && $62$ & $0.70$ & $1.5$ & $24$ && $42$ & $0.56$ & $1.3$ & $21$
&& $\mathbf{81}$ & $\mathbf{0.91}$ & $1.1$ & $20$ \\
& Case 2 & $4560$
 & $19$ & $0.53$ & $1.2$ & $23$ && $3$  & $0.34$ & $1.3$ & $69$ && $16$ & $0.18$ & $1.5$ & $17$
&& $15$ & $0.32$ & $1.2$ & $22$ && $46$ & $0.55$ & $1.5$ & $27$ && $22$ & $0.38$ & $1.4$ & $25$
&& $\mathbf{52}$ & $\mathbf{0.78}$ & $1.1$ & $19$ \\
& Case 3 & $2329$
 & $7$  & $0.42$ & $1.5$ & $29$ && <$1$ & $0.23$ & $2.1$ & $66$ && $6$  & $0.10$ & $1.5$ & $21$
&& $6$  & $0.23$ & $1.7$ & $30$ && $20$ & $0.30$ & $1.7$ & $31$ && $7$  & $0.17$ & $1.7$ & $33$
&& $\mathbf{21}$ & $\mathbf{0.59}$ & $1.2$ & $24$ \\
\midrule
\multirow{3}{*}{\rotatebox[origin=c]{90}{\tiny KITTI-360}}
& Case 1 & $182$
 & $17$ & $0.51$ & $1.6$ & $29$ && $3$ & $0.34$ & $1.5$ & $28$ && $21$ & $0.28$ & $1.4$ & $26$
&& $8$  & $0.23$ & $1.8$ & $35$ && $16$ & $0.32$ & $1.5$ & $26$ && $10$ & $0.35$ & $1.7$ & $32$
&& $\mathbf{53}$ & $\mathbf{0.85}$ & $1.6$ & $32$ \\
& Case 2 & $1555$
 & $10$ & $0.43$ & $1.6$ & $27$ && $1$ & $0.26$ & $2.4$ & $20$ && $8$ & $0.14$ & $1.3$ & $25$
&& $5$  & $0.19$ & $1.7$ & $32$ && $12$ & $0.26$ & $2.0$ & $31$ && $6$ & $0.28$ & $1.7$ & $30$
&& $\mathbf{41}$ & $\mathbf{0.74}$ & $1.8$ & $36$ \\
& Case 3 & $7143$
 & $4$  & $0.27$ & $1.8$ & $28$ && <$1$ & $0.14$ & $2.6$ & $20$ && $2$ & $0.06$ & $1.7$ & $31$
&& $2$  & $0.11$ & $2.0$ & $36$ && $6$ & $0.14$ & $2.2$ & $36$ && $1$ & $0.14$ & $2.1$ & $34$
&& $\mathbf{21}$ & $\mathbf{0.55}$ & $1.9$ & $42$ \\
\midrule
\multirow{3}{*}{\rotatebox[origin=c]{90}{\tiny NCLT}}
& Case 1 & $1854$
 & $6$  & $0.33$ & $2.0$ & $30$ && $1$ & $0.23$ & $2.6$ & $81$ && $21$ & $0.36$ & $2.9$ & $30$
&& $4$  & $0.16$ & $2.5$ & $41$ && $25$ & $0.42$ & $2.7$ & $35$ && $14$ & $0.39$ & $2.7$ & $35$
&& $\mathbf{60}$ & $\mathbf{0.91}$ & $2.7$ & $35$ \\
& Case 2 & $676$
 & $5$  & $0.38$ & $2.0$ & $25$ && $1$ & $0.20$ & $2.8$ & $89$ && $7$ & $0.17$ & $2.9$ & $34$
&& $1$  & $0.09$ & $1.9$ & $44$ && $20$ & $0.36$ & $2.9$ & $39$ && $9$ & $0.32$ & $2.8$ & $36$
&& $\mathbf{43}$ & $\mathbf{0.78}$ & $2.8$ & $38$ \\
& Case 3 & $150$
 & $4$  & $0.32$ & $2.5$ & $29$ && <$1$ & $0.14$ & $2.6$ & $76$ && $2$ & $0.15$ & $3.2$ & $19$
&& $0$  & $0.07$ &  --   &  --  && $12$ & $0.31$ & $3.2$ & $43$ && $2$ & $0.23$ & $2.4$ & $42$
&& $\mathbf{16}$ & $\mathbf{0.62}$ & $2.9$ & $36$ \\
\bottomrule
\end{tabular}
\label{tbl:results}
\vskip-0.2in
\end{table*}

\subsection{Landmark Matching and Registration Results}
Data association is attempted on each place pair, after which the landmark matches are used to estimate the relative rotation and translation.
Fig.~\ref{fig:lmr} plots algorithms' LMR curves for the NCLT dataset.
GraffMatch is able to match \rev{approximately $80\%$ of place pairs with an OIR of $0.8$ or more}, while other methods are only able to match less than $40\%$ of place pairs for the same inlier regime.
For correspondence sets having such a high inlier ratio, additional methods like RANSAC could be used to further refine the correspondence set if desired.
However, we show that even without this extra step, GraffMatch is able to successfully select the most correct correspondences.
Fig.~\ref{fig:lmr} also includes line-only and plane-only variants of GraffMatch to highlight the value of creating a consistency graph utilizing both landmark types.
In the NCLT dataset, there are fewer detected poles than planes, leading to a gap between the line-only and plane-only variants; however, using both landmarks results in a greater number of place pairs having a higher OIR.
\rev{Additionally, a version denoted GraffMatch-Naive is included that does not first shift landmark pairs (as in Proposition~\ref{prop:invariance}), and so does not use a distance that is transformation invariant.
The gap between GraffMatch and GraffMatch-Naive highlights the importance of using an invariant distance function when scoring landmark association consistency.}
In the right of Fig.~\ref{fig:lmr}, we see how the LMR curve changes with the inlier distance threshold $\tau_d$.
By using $d_\mathrm{Graff}$ for these geometric landmarks, GraffMatch is able to achieve an LMR curve most similar to the ground truth.
Due to space limitations, the LMR curves for each dataset are not shown, but the AUC for each dataset and case is listed in Table~\ref{tbl:results}.
In all datasets and cases, GraffMatch successfully matches more landmarks and has higher registration recall than other methods.
As expected, the local methods CPMatch and BruteForceRMSE perform worse than global methods because of wide sensor baselines present in the datasets.
However, with the exception of GraffMatch, the global methods leverage landmark properties that are view dependent.
ClusterMatch and CLIPPER-$\mathbb{R}^n$ use landmark centroids, which are not well-defined for infinite geometries and may shift depending on the detection.
Likewise, LiPMatch uses centroids as well as properties such as the area and the extent of planar patches.
CLIPPER-CP uses CP vectors, which are not invariant to translation and cannot be shifted as in Proposition~\ref{prop:invariance} because the CP vector is undefined for landmarks at the origin.
These results underscore the importance of using view-independent geometric representations for lines and planes, e.g., the affine Grassmannian mainfold used in the GraffMatch framework.

\begin{figure*}[t]
    \centering
    \vstretch{0.95}{\includegraphics[trim=0cm 0cm 0cm 0cm, clip, width=\textwidth]{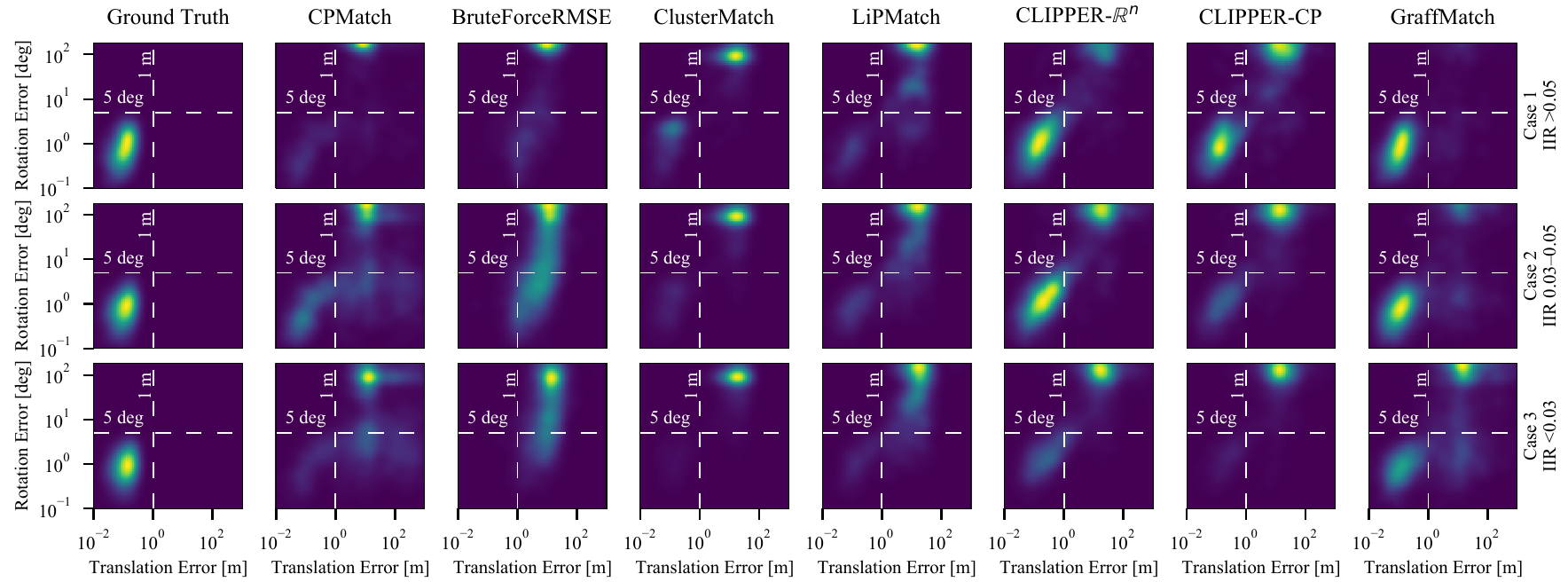}}
    \caption{
    Alignment error for place pairs, visualized as likelihood-normalized density plots.
    Columns correspond to algorithms.
    The first column shows the best alignment error achievable when registering landmarks using the true landmark matches.
    From top to bottom, each row corresponds to Cases 1, 2, and 3, with Case 1 being easiest and Case 3 being hardest.
    In each case, GraffMatch produces the greatest number of registrations with low alignment error. %
    }
    \vskip-0.2in
    \label{fig:error_plots}
\end{figure*}

Fig.~\ref{fig:error_plots} shows the alignment error of all attempted place pair registrations from KITTI as a grid of density heatmaps, where columns correspond to algorithms and rows (from top to bottom) correspond to Cases 1, 2, and 3.
The first column shows the alignment error using ground truth landmark matches with respect to ground truth alignment, indicating the best achievable landmark-based registration without a point-based refinement step (e.g., using ICP).
GraffMatch is the only data association method that consistently scores in the low-translation, low-rotation error regime.

Runtime statistics are shown in Fig.~\ref{fig:runtimes}.
Compared to CLIPPER-$\mathbb{R}^n$, GraffMatch incurs an additional \SI{50}{\milli\s} on average due to the calculation of $d_\mathrm{Graff}$, but is still capable of real-time at \SI{10}{\Hz} LiDAR rate.
GraffMatch runtime could be reduced by encoding prior knowledge in the putative associations, e.g., ground planes should be matched, or large planar patches are not likely to be matched to small patches.

\rev{
Although storing and processing raw points can be resource intensive, directly operating on point clouds is a typical means of registration and loop closure constraint generation.
Thus, we compare GraffMatch with existing point-based methods, with results in Table~\ref{tbl:vs-pts}.
As a baseline, we use FPFH~\cite{rusu2009fast} local descriptors (downsampled to \SI{0.35}{\m}) with RANSAC (max \num[group-separator={,}]{100000} iters.) implemented in Open3D.
Scan context~\cite{kim2021scan}, which is nominally used for place recognition, is also included.
Different from other point cloud place recognition methods, it provides an initial guess for yaw that can be used to initialize an ICP-based registration step.
For a fair comparison, Table~\ref{tbl:vs-pts} also reports GraffMatch solutions refined by ICP, but only using points of matched landmarks, thereby reducing storage and processing requirements.
Because scan context is only rotation invariant, it achieves relatively low recall since ground truth place pairs can have up to \SI{16}{\m} of translation.
Performing ICP increases the success rate, but only marginally when compared to GraffMatch without ICP refinement (see Table~\ref{tbl:results}).
These results on this dataset indicate the translation sensitivity of scan context, which limits it to small-baseline settings.
Similarly performing ICP on GraffMatch solutions leads to high success and accuracy, indicating that the landmark matches are of high quality (i.e., high LMR AUC in Table~\ref{tbl:results}), but there may be cases where the landmark-only least squares registration was ill-conditioned.
The FPFH baseline performs poorly due to non-uniform LiDAR sampling and low point cloud overlap, and its high computational cost highlights the potential difficulties of processing point clouds directly.}

\begin{figure}[t]
    \centering
    \vstretch{1}{\includegraphics[trim=0cm 0cm 0cm 0cm, clip, width=\columnwidth]{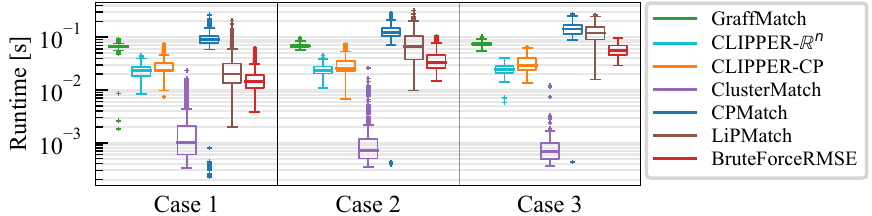}}
    \caption{
    Runtime statistics.
    GraffMatch is capable of running in real-time at the typical 10 Hz LiDAR rate.
    The runtime of GraffMatch could be reduced using landmark descriptors to decrease the number of putative associations.
    }
    \vskip0.1in
    \label{fig:runtimes}
\end{figure}

\begin{table}[t]
\scriptsize
\centering
\caption{
\rev{
Point-based registration methods are computationally demanding and do not necessarily lead to higher accuracy or success in wide-baseline settings.
Refining GraffMatch with ICP (only using points underlying landmarks) leverages the positive aspects of landmark and point-based registration.}
}
\setlength{\tabcolsep}{1pt}
\ra{1.05}
\begin{tabular}{p{0.75em} l c c c c c c c c c c c c c c c c c c c}
\toprule
&& \multicolumn{4}{c}{GraffMatch+ICP}
&& \multicolumn{4}{c}{FPFH+RANSAC}
&& \multicolumn{4}{c}{ScanContext~\cite{kim2021scan}}
&& \multicolumn{4}{c}{ScanContext+ICP}\\
\cmidrule{3-6}\cmidrule{8-11}\cmidrule{13-16}\cmidrule{18-21}
&& \recall & \rerr & \terr & \time
&& \recall & \rerr & \terr & \time
&& \recall & \rerr & \terr & \time
&& \recall & \rerr & \terr & \time\\[0.5em]
\toprule
\multirow{3}{*}{\rotatebox[origin=c]{90}{\tiny KITTI}}
& Case 1
 & $\mathbf{93}$ & $0.5$ & $13$ & $119$
&& $16$ & $1.7$ & $46$ & $1029$
&& $8$ & $0.9$ & $54$ & $25$
&& $29$ & $0.5$ & $21$ & $142$ \\
& Case 2
 & $\mathbf{76}$ & $0.4$ & $7$ & $125$
&& $12$ & $1.8$ & $44$ & $807$
&& $4$ & $0.9$ & $59$ & $25$
&& $31$ & $0.5$ & $25$ & $138$ \\
& Case 3
 & $\mathbf{49}$ & $0.3$ & $8$ & $156$
&& $5$ & $2.2$ & $51$ & $660$
&& <$1$ & $1.0$ & $52$ & $37$
&& $14$ & $0.6$ & $32$ & $140$ \\
\bottomrule
\end{tabular}
\label{tbl:vs-pts}
\end{table}

\subsection{Automatic LiDAR-LiDAR and Camera-Depth Calibration}
High-quality extrinsic calibration is a crucial prerequisite for many autonomous systems.
\rev{Calibration methods typically require data correspondence across sensors and state-of-the-art multimodal calibration often achieve this by requiring fiducial markers on calibration targets~\cite{beltran2022automatic}.}
\rev{Instead, unlabeled geometric landmarks extracted from each modality can be matched and then registered using GraffMatch.}
Fig.~\ref{fig:calib} shows two instances of extrinsic calibration using GraffMatch to globally match planes.
In Fig~\ref{fig:calib-shuttle}, many chess boards are held in the field-of-view of the LiDARs and planes are extracted from the two point clouds (red and blue).
GraffMatch correctly matches $10$ planes in \SI{75}{\milli\second} with calibration error of \SI{1.1}{\deg} and \SI{8}{\centi\meter}.
In Fig~\ref{fig:calib-d435i}, an example of multi-modal calibration using is presented, where 3D planes are extracted from chess boards in the image frame and matched to planes extracted from the depth sensor.
GraffMatch correctly matches $7$ planes in \SI{23}{\milli\second} resulting in a calibration error of \SI{0.9}{\deg} and \SI{5}{\centi\meter}.

\begin{figure}[t]
    \centering
    \begin{subfigure}[b]{1\columnwidth}
        \vstretch{0.9}{\includeinkscape[pretex=\tiny,width=\columnwidth]{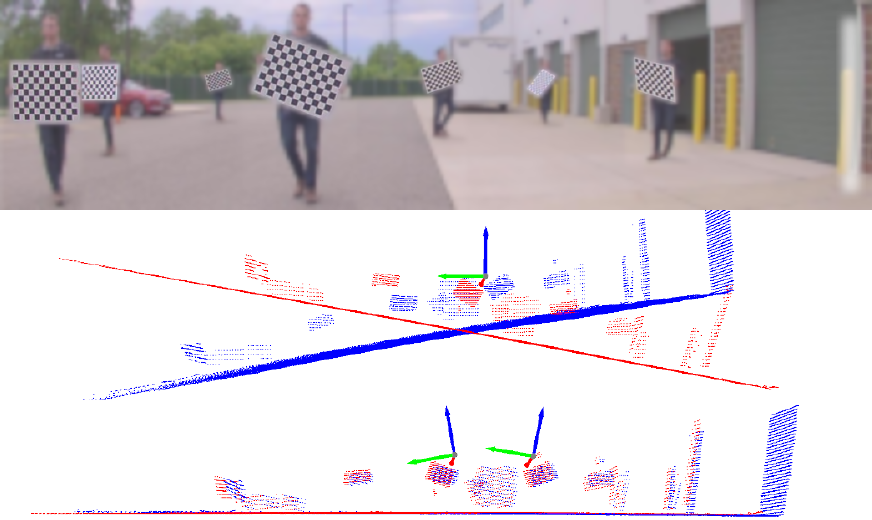}}
        \caption{}
        \label{fig:calib-shuttle}
    \end{subfigure}
    \vskip0.5em
    \begin{subfigure}[b]{1\columnwidth}
        \vstretch{0.9}{\includeinkscape[pretex=\tiny,width=\columnwidth]{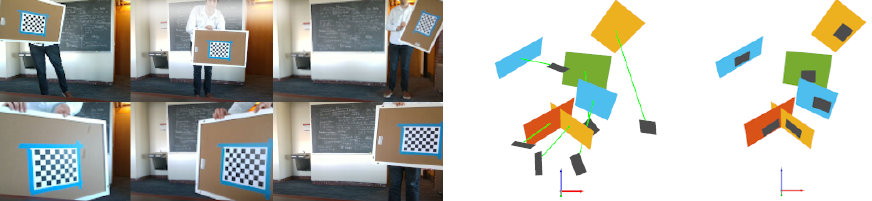}}
        \caption{}
        \label{fig:calib-d435i}
    \end{subfigure}
    \caption{
    Automatic extrinsic calibration using GraffMatch.
    (a) 3D planes are extracted from two Ouster LiDAR scans, illustrated with red and blue point clouds.
    Plane-to-plane correspondences are identified, allowing the sensors to be calibrated without an initial guess.
    (b) 3D plane detections of calibration targets are extracted from depth sensor and from intrinsically-calibrated RGB sensor using PnP.
    RGB and depth data collected from Intel D435i are arbitrarily transformed to simulate uncalibrated sensors.
    }
    \label{fig:calib}
\end{figure}

\section{Conclusion}\label{sec:conclusion}

We presented GraffMatch, a global method for matching and registering 3D lines and planes from two landmark sets without an initial alignment guess.
Our main contribution is the representation and data association of lines and planes as elements of the affine Grassmannian manifold.
By naturally representing these geometric landmarks as affine subspaces, we leverage the Grassmannian metric to calculate the distance between two landmarks.
We prove that affine Grassmannian distance is invariant to rotation and translation provided a shift operation is first applied.
This invariance property enables the use of efficient and robust graph-theoretic data association.

Future research includes reducing runtime and avoiding symmetries by developing landmark descriptors for generating putative matches (i.e., instead of an all-to-all hypothesis), and
estimating lines and planes directly via subspace tracking methods and using manifold-based optimization techniques to perform online bundle adjustment of affine Grassmannian landmarks within a SLAM framework.

\appendix

\subsection{Proof of Invariance}

\begin{proof}
\rev{
Suppose $\bY_1,\bY_2$ are shifted by $-b_1$ such that $b_{1}=0$.
This implies that the orthogonal displacement of $\bY_1$ is $b_{01}=0$ and the inner product of the Stiefel coordinates of $\bY_1,\bY_2$ is
\begin{equation*}
Y_1^\top Y_2 =
\begin{bsmallmatrix}
A_1^\top A_2 & \frac{1}{\eta_2}A_1^\top b_{02} \\
\frac{1}{\eta_2}b_{01}^\top A_2 & \frac{1}{\eta_1\eta_2}\left(b_{01}^\top b_{02} + 1\right)
\end{bsmallmatrix}
=
\begin{bsmallmatrix}
A_1^\top A_2 & \frac{1}{\eta_2}A_1^\top b_{02} \\
0 & \frac{1}{\eta_1\eta_2}
\end{bsmallmatrix},
\end{equation*}
with $\eta_i\eqdef\sqrt{\|b_{0i}\|^2 + 1}$.
Recall that computing $d_\Graff(\bY_1,\bY_2)$ uses the SVD of $Y_1^\top Y_2$.
Given $T=(R,t)\in\mathrm{SE}(3)$, let $\bar{\bY}_1,\bar{\bY}_2$ be the transformations of $\bY_1,\bY_2$,  with affine coordinates
\begin{equation*}
\bY_i:[A_i,b_i] \xrightarrow{\quad T\quad} \bar{\bY}_i:[RA_i, Rb_i + t].
\end{equation*}
Shifting $\bar{\bY}_1,\bar{\bY}_2$ by $-\bar{b}_1\eqdef-(Rb_1+t)$ leads to the affine coordinates $\bar{\bY}_1:[RA_1, 0]$ and $\bar{\bY}_2:[RA_2, Rb_2]$ so that
\begin{equation*}
\bar{Y}_1^\top \bar{Y}_2 =
\begin{bsmallmatrix}
A_1^\top A_2 & \tfrac{1}{\eta_2}A_1^\top b_{02} \\
0 & \tfrac{1}{\eta_1\eta_2}
\end{bsmallmatrix}
= Y_1^\top Y2,
\end{equation*}
implying that $d_\Graff(\bY_1,\bY_2)$ is invariant to $(R,t)$.}
\end{proof}

\balance %

\bibliographystyle{IEEEtran}
\bibliography{refs}

\end{document}